\numberwithin{equation}{section}
\newcommand \E {\mathop{\mbox{\ensuremath{\mathbb{E}}}}\nolimits}
\renewcommand \Pr {\mathop{\mbox{\ensuremath{\mathbb{P}}}}\nolimits}
\newcommand{\cset}[2]{\left\{#1 \,\middle|\, #2 \right\} }
\newcommand{\measN}{\lambda}
\newcommand{\meas}[1]{\measN\left(#1\right)}
\newcommand \Reals {{\mathbb{R}}}
\newcommand \CA {{\mathcal{A}}}
\newcommand \CL {{\mathcal{L}}}
\newcommand \CO {{\mathcal{O}}}
\newcommand \CR {{\mathcal{R}}}
\newcommand \CS {{\mathcal{S}}}
\newcommand \CT {{\mathcal{T}}}
\newcommand \defn {\mathrel{\triangleq}}
\newcommand \argmax{\mathop{\rm arg\,max}}
\newcommand \Actions {\CA}
\newcommand \States {\CS}
\newcommand \MDPs {\ensuremath{\mathcal{M}}}
\newcommand \mdp {\ensuremath{\mu}}
\newcommand \pol {\ensuremath{\pi}}
\newcommand \Pols {\ensuremath{\mathcal{P}}}
\newcommand \bel {\xi}
\newcommand \rmax {r_{\textrm{max}}}
\newcommand\dd{\,\mathrm{d}}
\newcommand \Regret {\CL}
\newcommand \Util {U}
\newcommand \disc {\gamma}
\tikzstyle{place}=[circle,draw=black,draw=blue!50,fill=blue!20,inner sep=0mm, minimum size=6mm]
\tikzstyle{hidden}=[circle,draw=black,draw=blue!50,fill=blue!01,inner sep=0mm, minimum size=6mm]
\tikzstyle{observed}=[circle,draw=black,draw=blue!50,fill=blue!10,inner sep=0mm, minimum size=6mm]
\tikzstyle{transition}=[rectangle,draw=black!50,fill=black!20,thick]
\newcommand \MCBRL {\textsc{MCBRL}}
\newcommand \MSBI {\textsc{MSBI}}
\newcommand \MMBI {\textsc{MMBI}}
\newcommand \Beetle {\textsc{Beetle}}
\newcommand \Pstat {\bar{\Pols}_1}
\newcommand \emdp {\hat{\mdp}_\bel}
\title{Robust Bayesian reinforcement learning through tight lower bounds}
\author{Christos Dimitrakakis\inst{1}}
\institute{EPFL, Lausanne, Switzerland\\
  \email{christos.dimitrakakis@epfl.ch} 
}
\begin{document}
\mainmatter  % start of an individual contribution

\maketitle
\begin{abstract}
  In the Bayesian approach to sequential decision making, exact calculation of the (subjective) utility is intractable. This extends to most special cases of interest, such as reinforcement learning problems.  While utility bounds are known to exist for this problem, so far none of them were particularly tight. In this paper, we show how to efficiently calculate a lower bound, which corresponds to the utility of a near-optimal {\em memoryless} policy for the decision problem, which is generally different from both the Bayes-optimal policy and the policy which is optimal for the expected MDP under the current belief. We then show how these can be applied to obtain robust exploration policies in a Bayesian reinforcement learning setting.
\end{abstract}

\section{Setting}
\label{sec:setting}
We consider decision making problems where an agent is acting in a
(possibly unknown to it) environment. By choosing actions, the agent
changes the state of the environment and in addition obtains scalar
rewards. The agent acts so as to maximise the expectation of the
utility function: $U_t \defn \sum_{k=t}^{T}\gamma^k r_{k}$, where
$\gamma \in [0,1]$ is a discount factor and where the instantaneous
rewards $r_t \in [0, \rmax]$ are drawn from a Markov decision process
(MDP) $\mdp$, defined on a state space $\CS$ and an action space
$\CA$, both equipped with a suitable metric and $\sigma$-algebra, with
a set of transition probability measures $\cset{\CT_\mdp^{s,a}}{s \in
  \CS, a \in \CA}$ on $\CS$ , and a set of reward probability measures
$\cset{\CR_\mdp^{s,a}}{s \in \CS, a \in \CA}$ on $\Reals$, such that:
\begin{align}
  r_t \mid s_t = s, a_t = a
  &
  \sim \CR_\mdp^{s,a},
  &
  s_{t+1} \mid s_t = s, a_t = a \sim \CT_\mdp^{s,a},
\end{align}
where $s_t \in \States$ and $a_t \in \Actions$ are the state of the
MDP, and the action taken by the agent at time $t$, respectively.  The
environment is controlled via a {\em policy} $\pol \in \Pols$.  This
defines a conditional probability measure on the set of actions, such
that $\Pr_\pol(a_t \in A \mid s^t, a^{t-1}) = \pol(A \mid s^t,
a^{t-1})$ is the probability of the action taken at time $t$ being in
$A$, where we use $\Pr$, with appropriate subscripts, to denote
probabilities of events and $s^t \defn s_1, \ldots, s_t$ and $a^{t-1}
\defn a_1, \ldots, a_{t-1}$ denotes sequences of states and actions
respectively. We use $\Pols_k$ to denote the set of $k$-order Markov
policies. Important special cases are the set of {\em blind} policies
$\Pols_0$ and the set of {\em memoryless} policies $\Pols_1$. A policy
in $\pol \in \bar{\Pols}_k \subset \Pols_k$ is {\em stationary}, when
$\pol(A \mid s_{t-k+1}^t, a_{t-k+1}^{t-1}) = \pol(A \mid s^k, a^{k-1})$ for
all $t$.

The expected utility, conditioned on the policy, states and actions is
used to define a {\em value function} for the MDP $\mdp$ and a
stationary policy $\pol$, at stage $t$:
\begin{align}
  \label{eq:mdp-value-function}
  Q_{\mdp,t}^\pol(s,a) &\defn \E_{\mdp, \pol} (U_t  \mid s_t =s, a_t =a),
  &
  V_{\mdp,t}^\pol(s) &\defn  \E_{\mdp, \pol} (U_t  \mid s_t =s),
\end{align}
where the expectation is taken with respect to the process defined
jointly by $\mdp, \pol$ on the set of all state-action-reward
sequences $(\States, \Actions, \Reals)^*$.  The {\em optimal} value
function is denoted by $Q^*_{\mdp, t} \defn \sup_\pol Q^\pol_{\mdp,
  t}$ and $V^*_{\mdp, t} \defn \sup_\pol V^\pol_{\mdp, t}$.  We denote
the optimal policy\footnote{We assume that there exists at least one
  optimal policy. If there are multiple optimal policies, we choose
  arbitrarily among them.} for $\mdp$ by $\pol_\mdp^*$. Then
$Q^*_{\mdp, t}= Q^{\pol_\mdp^*}_{\mdp, t}$ and $V^*_{\mdp, t}=
V^{\pol_\mdp^*}_{\mdp, t}$.

There are two ways to handle the case when the true MDP is unknown.
The first is to consider a set of MDPs such that the probability of
the true MDP lying outside this set is bounded from
above~\citep[e.g.][]{thesis:Kaelbling,icml:kearns:1998,brafman2003r,JMLR:UCRL2,strehl2009reinforcement,strehl2008analysis}.
The second is to use a Bayesian framework, whereby a full distribution
over possible MDPs is maintained, representing our subjective belief,
such that MDPs which we consider more likely have higher
probability~\citep[e.g.][]{duff2002olc,dearden99bayesian,ICML:Wyatt:2001,Asmuth:BOSS,dimitrakakis-icaart2010}. Hybrid
approaches are relatively rare~\citep{mfard:nips2010}. In this paper,
we derive a method for efficiently calculating near-optimal, robust,
policies in a Bayesian setting.

\subsection{Bayes-optimal policies}
\label{sec:bayesian-formulation}
In the Bayesian setting, our uncertainty about the Markov decision
process (MDP) is formalised as a probability distribution on the class
of allowed MDPs. 
More precisely, assume a probability measure $\bel$ over a set of
possible MDPs $\MDPs$, representing our belief.  The expected utility
of a policy $\pol$ with respect to the belief $\bel$ is:
\begin{equation}
  \label{eq:expected-utility}
  \E_{\bel, \pol} U_t = \int_\MDPs \E_{\mdp,\pol}(U_t) \dd{\bel}(\mdp).
\end{equation}
Without loss of generality, we may assume that all MDPs in $\MDPs$
share the same state and action space. 
%We then may write the
%conditional expectation:
%\begin{equation}
%  \label{eq:expected-conditional-utility}
%  \E_{\bel, \pol} (U_t \mid s_t = s, a_t = a)
%  = \int_\MDPs Q_{\mdp,t}^\pol(s, a) \dd{\bel}(\mdp).
%\end{equation}
For compactness, and with minor abuse of notation, we define the
following value functions with respect to the belief:
\begin{align}
  \label{eq:belief-value-function}
  Q_{\bel,t}^\pol(s,a) &\defn \E_{\bel, \pol} (U_t  \mid s_t =s, a_t =a),
  &
  V_{\bel,t}^\pol(s) &\defn  \E_{\bel, \pol} (U_t  \mid s_t =s),
\end{align}
which represent the expected utility under the belief $\bel$, at stage
$t$, of policy $\pol$, conditioned on the current state and action.
\begin{definition}[Bayes-optimal policy]
  A Bayes-optimal policy $\pol^*_\bel$ with respect to a belief $\bel$
  is a policy maximising (\ref{eq:expected-utility}).  Similarly to
  the known MDP case, we use $Q^*_{\bel,t}, V^*_{\bel,t}$ to denote
  the value functions of the Bayes-optimal policy.
\end{definition}
Finding the Bayes-optimal policy is generally
intractable~\citep{Degroot:OptimalStatisticalDecisions,duff2002olc,Gittins:1989}.
It is important to note that a Bayes-optimal policy is not necessarily
the same as the optimal policy for the true MDP. Rather, it is the
optimal policy given that the true MDP was drawn at the start of the
experiment from the distribution $\bel$. All the theoretical
development in this paper is with respect to $\bel$.

\subsection{Related work and main
  contribution} \label{sec:related-work} Since computation of the
Bayes-optimal policy is intractable in the general case, in this work
we provide a simple algorithm for finding near-optimal {\em
  memoryless} policies in polynomial time.  By definition, for any
belief $\bel$, the expected utility under that belief of any policy
$\pol$ is a lower bound on that of the optimal policy
$\pol^*_\xi$. Consequently, the near-optimal memoryless policy gives
us a tight lower bound on the subjective utility.

A similar idea was used in~\citep{dimitrakakis-icaart2010}, where the
stationary policy that is optimal on the {\em expected MDP} is used to
obtain a lower bound.  This is later refined through a stochastic
branch-and-bound technique that employs a similar upper bound. In a
similar vein, \citep{Furmston:VariationalRL} uses approximate Bayesian
inference to obtain a stationary policy for the current belief.  More
specifically, they consider two families of expectation maximisation
algorithms. The first uses a variational approximation to the
reward-weighted posterior of the transition distribution, while the
second performs expectation propagation on the first two
moments. However, none of the above approaches return the optimal
stationary policy.

It is worthwhile to mention the very interesting point-based \Beetle{}
algorithm of~\citet{poupart2006asd}, which discretised the belief
space by sampling a set of future beliefs (rather than MDPs).  Using
the convexity of the utility with respect to the belief, they
constructed a lower bound via a piecewise-linear approximation of the
complete utility from these samples. The approach results in an
approximation to the optimal non-stationary policy. Although the
algorithm is based on an optimal construction reported in the same
paper, sufficient conditions for its optimality are not known.

In this paper, we obtain a tight lower bound for the {\em current}
belief by calculating a nearly optimal {\em memoryless} policy. The
procedure is computationally efficient, and we show that it results in
a much tighter bound than the value of the expected-MDP-optimal
policy. We also show that it can be used in practice to perform robust
Bayesian exploration in unknown MDPs.  This is achieved by computing a
new memoryless policy once our belief has changed significantly, a
technique also employed by other
approaches~\citep{JMLR:UCRL2,DBLP:conf/nips/AuerJO08,Asmuth:BOSS,strens2000bayesian,ICML:Wyatt:2001}.
It can be seen as a principled generalisation of the sampling approach
suggested in \cite{strens2000bayesian} from a single MDP sample to
multiple samples from the posterior. The crucial difference is that,
while previous work uses some form of {\em optimistic} policy, we
instead employ a more conservative policy in each stationary
interval. This can be significantly better than the policy which is
optimal for the expected MDP.

The first problem we tackle is how to compute this policy given a
belief over a finite number of MDPs. For this, we provide a simple
algorithm based on backwards induction~\cite[see][for
example]{Degroot:OptimalStatisticalDecisions}. In order to extend this
approach to an arbitrary MDP set, we employ Monte Carlo sampling from
the current posterior. Unlike other Bayesian sampling
approaches~\cite{dearden99bayesian,strens2000bayesian,Asmuth:BOSS,castro2010smarter,wang:bayesian-sparse-sampling:icml:2005,dimitrakakis-icaart2010,ICML:Wyatt:2001},
we use these samples to estimate a policy that is nearly optimal
(within the restricted set of memoryless policies) with respect to the
distribution these samples were drawn from. Finally, we provide
theoretical and experimental analyses of the proposed algorithms.

\section{MMBI: Multi-MDP Backwards Induction}
Even when our belief $\bel$ is a probability measure over a finite set
of MDPs $\MDPs$, the finding an optimal policy is intractable. For
that reason, we restrict ourselves to memoryless policues $\pol \in
\Pols_1$.  We can approximate the optimal {\em memoryless} policy with
respect to $\bel$, by setting the posterior measure given knowledge of
the policy so far and the current state, to equal the initial belief,
i.e. $\bel(\mdp \mid s_t = s, \pol) = \bel(\mdp)$ (we do not condition
on the complete history, since the policies are memoryless). The
approximation is in practice quite good, since the difference between
the two measures tends to be small. The policy $\pol_\MMBI$ can then
be obtained via the following backwards induction. By definition:
\begin{align}
  \label{eq:backwards-induction}
  Q_{\bel,t}^\pol(s,a)
  &= \E_{\bel, \pol} (r_t  \mid s_t =s, a_t =a)
  + \gamma \E_{\bel, \pol} (U_{t+1}  \mid s_t =s, a_t =a),
\end{align}
where the expected reward term can be written as
\begin{subequations}
  \label{eq:expected-reward}
  \begin{align}
    \E_{\bel, \pol} (r_t  \mid s_t =s, a_t =a)
    &=
    \int_\MDPs \E_{\mdp} (r_t  \mid s_t =s, a_t =a) \dd{\bel(\mdp)},
    \\
    \E_{\mdp} (r_t  \mid s_t =s, a_t =a)
    &=
    \int_{-\infty}^\infty r \dd{\CR_\mdp^{s,a}}(r).
  \end{align}
\end{subequations}
The next-step utility can be written as:
\begin{subequations}
  \label{eq:next-step-utility}
  \begin{align}
    \E_{\bel, \pol} (U_{t+1}  \mid s_t =s, a_t=a)
    &=
    \int_\MDPs \E_{\mdp, \pol} (U_{t+1}  \mid s_t =s, a_t =a) \dd{\bel}(\mdp),
    \\
    \E_{\mdp, \pol} (U_{t+1}  \mid s_t =s, a_t =a)
%    &=
%    \int_\CS \E_{\mdp, \pol} (U_{t+1}  \mid s_{t+1} =s')  \dd{\CT_\mdp^{s,a}}(s')
 %   \\
    &=
    \int_\CS V_{\mdp,t+1}^\pol(s') \dd{\CT_\mdp^{s,a}}(s').
  \end{align}
\end{subequations}
Putting those steps together, we obtain
Algorithm~\ref{alg:multi-mdp-value-iteration}, which greedily
calculates a memoryless policy for a $T$-stage problem and returns its
expected utility.
\begin{algorithm}
  \begin{algorithmic}[1]
    \Procedure{MMBI}{$\MDPs, \bel, \gamma, T$}
    \State Set  $V_{\mdp,T+1}(s) = 0$ for all $s \in \CS$.
    \For{$t = T, T-1, \ldots, 0$}
    \For{$s \in \CS, a \in \CA$}
    \State Calculate $Q_{\bel,t}(s,a)$ from \eqref{eq:backwards-induction}
    using $\{V_{\mdp,t+1}\}$ \label{step:combine}.
    \EndFor
    \For{$s \in \CS$}
    \State $a^*_{\bel,t}(s) = \argmax \cset{Q_{\bel, t}(s,a)}{a \in \CA}$.
    \For{$\mdp \in \MDPs$}
    \State  $V_{\mdp, t}(s) = Q_{\mdp, t}(s, a^*_{\bel,t}(s))$. \label{step:local}
    \EndFor
    \EndFor
    \EndFor
    \EndProcedure
  \end{algorithmic}
  \caption{MMBI - Backwards induction on multiple MDPs.}
  \label{alg:multi-mdp-value-iteration}
\end{algorithm}
\begin{wrapfigure}{r}{0.5\textwidth}
%\begin{figure}
  \centering
    \psfrag{mean}[r][r][1.0][0]{$\E_{\bel}V^{\pol^*(\emdp)}_\mdp$}
    \psfrag{MMBI}[r][r][1.0][0]{$\max_{\pol \in \Pols_1} \E_\bel V^\pol_\mdp$}
    \psfrag{UB}[r][r][1.0][0]{$\E_{\bel} \max_\pol V^\pol_\mdp$}
    \psfrag{utility}[B][B][1.0][0]{expected utility over all states}
    \psfrag{belief}[B][B][1.0][0]{Uncertain $\Leftarrow$ $\xi$ $\Rightarrow$ Certain}

  \includegraphics[width=0.45\textwidth]{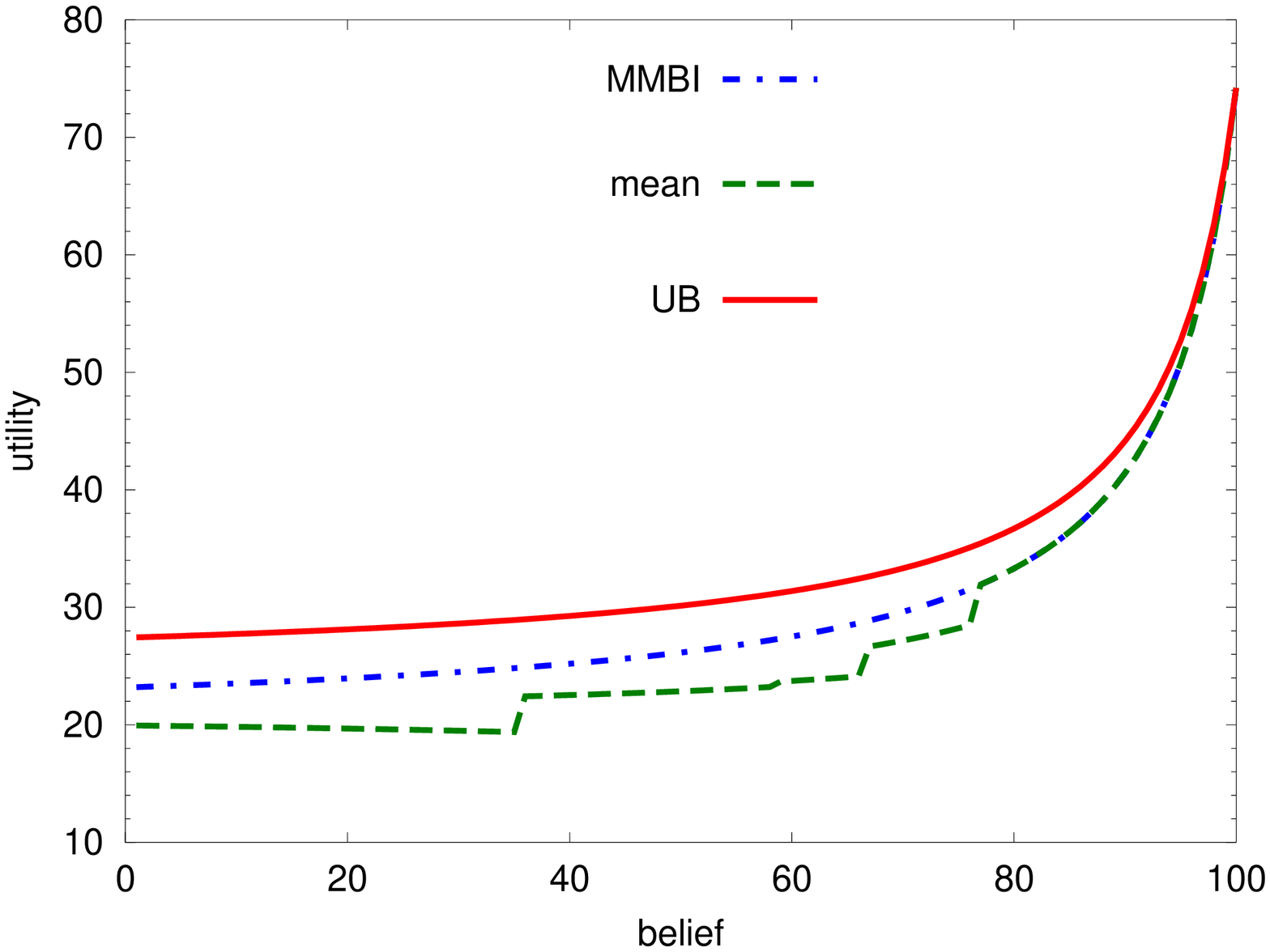}
  \caption{Value function bounds.}
  \label{fig:value-function-bound}
%\end{figure}
\end{wrapfigure}
The calculation is greedy, since optimising over $\pol$ implies that
at any step $t + k$, we must condition the belief on past policy steps
$\bel(\mdp \mid s_{t+k} = s, \pol_t, \ldots, \pol_{t+k-1})$ to
calculate the expected utility correctly. Thus, the optimal
$\pol_{t+k}$ depends on both future and past selections. Nevertheless,
it is easy to see that Alg.~\ref{alg:multi-mdp-value-iteration}
returns the correct expected utility for time step $t$.
Theorem~\ref{the:optimal-value} bounds the gap between this and the
Bayes-optimal value function when the difference between the current and future beliefs is small.
\begin{theorem}
  For any $k \in [t, T]$, let $\bel_k \defn \bel( \cdot \mid s^k,
  a^k)$ be the posterior after $k$ observations.  Let $\measN$ be a
  dominating measure on $\MDPs$ and $\|f\|_{\measN,1} = \int_\MDPs
  |f(\mdp)| \dd{\meas{\mdp}}$, for any $\measN$-measurable function
  $f$.  If $\|\bel_t - \bel_k\|_{\measN, 1} \leq \epsilon$, for all
  $k$, then the policy $\pol_{\MMBI}$ found by MMBI is within $\rmax (1
  - \disc)^{-2} \epsilon$ of the Bayes-optimal policy $\pol^*_\bel$.
  \label{the:optimal-value}
\end{theorem}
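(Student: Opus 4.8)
The plan is to exhibit both the Bayes-optimal value and the value returned by MMBI as solutions of backward recursions that differ only in which belief is used at each stage, and then to control the accumulated discrepancy by a discounted error propagation. First I would record the two facts that make the gap one-sided and computable. Since $\pol_\MMBI \in \Pols_1$ is memoryless, the quantity returned by Algorithm~\ref{alg:multi-mdp-value-iteration}, namely $\E_{\bel_t} V_{\mdp,t}(s) = \int_\MDPs V^{\pol_\MMBI}_{\mdp,t}(s)\dd\bel_t(\mdp)$, is exactly its true subjective value $V^{\pol_\MMBI}_{\bel_t,t}(s)$ (the ``correct expected utility'' noted after the algorithm), and because $\pol^*_\bel$ maximises over all policies we have $V^*_{\bel_t,t}(s)\geq V^{\pol_\MMBI}_{\bel_t,t}(s)$. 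Hence only an upper bound on $V^*_{\bel_t,t}(s) - V^{\pol_\MMBI}_{\bel_t,t}(s)$ is needed.

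Next I would write the Bayes-optimal value $V^*_{\bel',k}(s)$ as a backward recursion on hyperstates, where the stage-$k$ backup uses the belief $\bel'$, averages rewards and transitions against it, and evaluates the next stage at the \emph{updated} posterior $\bel^{s,a,s'}$; MMBI performs the analogous backup with the belief frozen at $\bel_t$ while maintaining the per-MDP values $V_{\mdp,k}$. The key algebraic step is to rewrite MMBI's next-stage term using the posterior-reweighting identity $\CT_\mdp^{s,a}(s')\dd\bel(\mdp) = \CT_\bel^{s,a}(s')\dd\bel^{s,a,s'}(\mdp)$, which (summing over $s'$ in \eqref{eq:next-step-utility}) gives $\int_\MDPs \sum_{s'}\CT_\mdp^{s,a}(s') V_{\mdp,k+1}(s')\dd\bel(\mdp) = \sum_{s'}\CT_\bel^{s,a}(s')\,V^{\pol_\MMBI}_{\bel^{s,a,s'},k+1}(s')$. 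This exposes the same posterior $\bel^{s,a,s'}$ that appears in the Bayes-optimal backup, so the two backups can be compared term by term rather than through two structurally different expressions.

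I would then define $\Delta_k \defn \sup_s \sup_{\bel':\,\|\bel'-\bel_t\|_{\measN,1}\leq\epsilon} \left| V^*_{\bel',k}(s) - V^{\pol_\MMBI}_{\bel_t,k}(s)\right|$ and derive a one-step inequality. Bounding $|r_t|\leq\rmax$ controls the reward-mismatch between $\E_{\bel'}(r_t\mid s,a)$ and $\E_{\bel_t}(r_t\mid s,a)$ by $\rmax\epsilon$; the uniform value bound $\|V\|_\infty\leq\rmax(1-\disc)^{-1}$ together with the hypothesis $\|\bel_t-\bel_k\|_{\measN,1}\leq\epsilon$ controls the belief-substitution terms by $\rmax(1-\disc)^{-1}\epsilon$; and the remaining contribution is the propagated $\disc\Delta_{k+1}$, where the supremum in the definition of $\Delta_{k+1}$ absorbs the difference at the posterior $\bel^{s,a,s'}$ precisely because every posterior encountered along any trajectory lies within $\epsilon$ of $\bel_t$. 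Solving $\Delta_k \leq \disc\Delta_{k+1} + \rmax\epsilon + \disc\rmax(1-\disc)^{-1}\epsilon$ with $\Delta_{T+1}=0$, the geometric factor $\sum_k \disc^k = (1-\disc)^{-1}$ collapses the $\epsilon$-terms into $\rmax(1-\disc)^{-2}\epsilon$, giving $\Delta_t$ and hence the claimed bound on the gap.

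The main obstacle is the middle step: faithfully reducing the belief-augmented Bayes-optimal recursion to a form that matches MMBI's per-MDP bookkeeping, i.e.\ verifying the reweighting identity and confirming that the ``frozen-belief'' and ``correct-posterior'' backups differ \emph{only} through the two $\epsilon$-bounded substitutions. I expect the per-stage greediness of MMBI \emph{not} to be an additional source of error, since the frozen-belief greedy action is exactly optimal for the frozen-belief backup, and the whole content of the hypothesis is that freezing the belief is cheap. Some care is needed in the constant: a naive separate accounting of the two substitution terms introduces a spurious factor of $(1+\disc)$, so the reward- and value-substitution errors must be combined carefully to land exactly on $(1-\disc)^{-2}$.
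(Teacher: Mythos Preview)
Your proposal is correct and lands on the same bound by the same overall mechanism the paper uses---a per-stage error controlled by the value-function bound $\|V\|_\infty\le \rmax(1-\disc)^{-1}$ together with the $L^1$ belief perturbation $\epsilon$, and then a geometric sum over stages yielding the extra $(1-\disc)^{-1}$. The difference is one of granularity. The paper's proof is three lines: it bounds the stage-$k$ error directly as
\[
\left|\int_\MDPs [\bel_t(\mdp)-\bel_k(\mdp)]\,V_{\mdp,k}(s)\,\dd\measN(\mdp)\right|\le \frac{\rmax}{1-\disc}\,\epsilon,
\]
and then invokes the geometric series, without writing out the hyperstate recursion or the posterior-reweighting identity. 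You instead unroll both backups explicitly, split the stage error into a reward term ($\rmax\epsilon$) and a discounted value-substitution term ($\disc\rmax(1-\disc)^{-1}\epsilon$), and solve the recursion $\Delta_k\le \disc\Delta_{k+1}+\rmax\epsilon(1-\disc)^{-1}$; the two terms recombine to the paper's single per-stage bound, so nothing is gained or lost in the constant. What your route buys is an explicit check that the frozen-belief and Bayes-optimal backups really do differ only through the belief substitution (via the identity $\CT_\mdp^{s,a}(s')\dd\bel(\mdp)=\CT_\bel^{s,a}(s')\dd\bel^{s,a,s'}(\mdp)$), which the paper leaves implicit. One caution: your absorption of the posterior $\bel^{s,a,s'}$ into the supremum defining $\Delta_{k+1}$ requires the hypothesis $\|\bel_t-\bel_k\|\le\epsilon$ to hold for \emph{every} reachable posterior at stage $k$, not merely along a single realised trajectory; the paper's proof tacitly makes the same reading, so you are consistent with it.
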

\begin{proof}
  The error at every stage $k > t$, is bounded as follows:
  \begin{align*}
    |V_{\bel, k}(s) - \E_{\bel}(U_k \mid s^k, a^k)|
    &= 
    \left|\int_\MDPs [\bel_t(\mdp) - \bel_k(\mdp)(s)] V_{\mdp,k}(s) \dd{\meas{\mdp}}\right|
    \\
    &\leq
    \frac{\rmax}{1 - \disc} \int_\MDPs |\bel_t(\mdp) - \bel_k(\mdp)(s)|  \dd{\meas{\mdp}}
    %\\
    %&
    \leq
    \frac{\rmax}{1 - \disc} \epsilon.
  \end{align*}
  The final result is obtained via the geometric series. \qed
\end{proof}
We can similarly bound the gap between the MMBI policy and the
$\bel$-optimal memoryless policy, by bounding $\sup_{k, s, \pol}
\|\bel_t(\cdot) - \bel_t( \cdot \mid s_k = s, \pol)\|_{\measN, 1}$.

The $\bel$-optimal memoryless policy is generally {\em different} from
the policy which is optimal with respect to the expected MDP $\emdp
\defn \E_\bel \mdp$, as can be seen via counterexample where $\E_\bel
V^\pol_\mdp \neq V^\pol_{\emdp}$, or even where $\E_\bel \mdp \notin
\MDPs$. MMBI can be used to obtain a much tighter value function bound
than the $\emdp$-optimal policy, as shown in
Fig.~\ref{fig:value-function-bound}, where the MMBI bound is compared
to the $\emdp$-optimal policy bound and the simple upper bound:
$V^*_\bel(s) \leq \E_{\bel} \max_\pol V^\pol_\mdp(s)$. The figure
shows how the bounds change as our belief over 8 MDPs changes. When we
are more uncertain, MMBI is much tighter than
$\emdp$-optimal. However, when most of the probability mass is around
a single MDP, both lower bounds coincide. In further experiments on
online reinforcement learning, described in
Sec.~\ref{sec:experiments}, near-optimal memoryless policies are
compared against the $\emdp$-optimal policy.

% On the other hand, for certain types of
%beliefs, the mean MDP heuristic is good enough.

\subsection{Computational complexity}
\label{sec:complexity}
When $\MDPs$ is finite and $T < \infty$, \MMBI{}
(Alg.~\ref{alg:multi-mdp-value-iteration}) returns a
greedily-optimised policy $\pol_\MMBI$ and its value function. When $T
\to \infty$, \MMBI{} can be used to calculate an $\epsilon$-optimal
approximation by truncating the horizon, as shown below.
\begin{lemma}
  The complexity of
  Alg.~\ref{alg:multi-mdp-value-iteration} 
  for bounding the value function error by $\epsilon$, is
  $\CO\left(
    \left[
    |\MDPs||\States|^2(|\Actions| + 1)
    + 
    (1 + |\MDPs|) |\States| |\Actions|
    \right]
    \log_\gamma \frac{\epsilon (1 - \disc)}{\rmax}
  \right)$,
  assuming $r_t \in [0,\rmax]$, 
  \label{lem:mmbi-complexity}
\end{lemma}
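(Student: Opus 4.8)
The plan is to split the count into the cost of a single backward-induction stage and the number of stages needed to reach accuracy $\epsilon$, and then multiply the two.

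First I would fix a stage $t$ and tally the work done by Alg.~\ref{alg:multi-mdp-value-iteration}, using that finiteness of $\MDPs$, $\CS$ and $\CA$ turns every integral in \eqref{eq:expected-reward}--\eqref{eq:next-step-utility} into a finite sum. The dominant cost is the evaluation of the per-MDP next-step utilities $\int_\CS V_{\mdp,t+1}(s')\dd{\CT_\mdp^{s,a}}(s')$ appearing in \eqref{eq:next-step-utility}: each is a sum of $|\CS|$ terms, and one is needed for every triple $(\mdp,s,a)$, giving $|\MDPs||\CS|^2|\CA|$ operations, while adding the reward terms of \eqref{eq:expected-reward} is lower order. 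Forming $Q_{\bel,t}(s,a)$ in step~\ref{step:combine} by the belief-weighted sum over the $|\MDPs|$ MDPs costs $|\MDPs||\CS||\CA|$, and the $\argmax$ over actions costs a further $|\CS||\CA|$. Finally, the local update in step~\ref{step:local} recomputes, for each pair $(\mdp,s)$, the next-step utility for the single greedy action $a^*_{\bel,t}(s)$, another sum of $|\CS|$ terms per pair, i.e.\ $|\MDPs||\CS|^2$ operations. Collecting these four contributions gives exactly the bracketed per-stage cost $|\MDPs||\CS|^2(|\CA|+1)+(1+|\MDPs|)|\CS||\CA|$, where the ``$+1$'' in the first factor is the local-update pass and the ``$1+$'' in the second is the $\argmax$ pass.

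Next I would bound the number of stages. Running the induction for $n$ stages from $V_{\mdp,T+1}\equiv 0$ yields an $n$-horizon value, whose gap to the infinite-horizon value is just the discarded discounted tail; since $r_t\in[0,\rmax]$ this tail is at most $\sum_{k>n}\disc^k\rmax\le \disc^n \rmax/(1-\disc)$. Requiring this to be at most $\epsilon$ gives $\disc^n \le \epsilon(1-\disc)/\rmax$, and since $\disc<1$ this is equivalent to $n\ge\log_\disc\frac{\epsilon(1-\disc)}{\rmax}$. Hence $\CO\!\left(\log_\disc\frac{\epsilon(1-\disc)}{\rmax}\right)$ stages suffice; I would deliberately use this direct tail estimate rather than a contraction argument, so as not to rely on any contraction property of the multi-MDP operator.

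Multiplying the per-stage cost by the number of stages then yields the claimed bound. The only genuinely delicate part will be the per-stage bookkeeping: matching the exact form $(|\CA|+1)$ and $(1+|\MDPs|)$ requires recognising that step~\ref{step:local} performs a second, separate transition average for the greedy action (rather than reusing stored per-MDP values) and that the $\argmax$ contributes the isolated $|\CS||\CA|$ term. The horizon-truncation half is a routine geometric-series estimate.
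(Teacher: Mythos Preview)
Your proposal is correct and follows essentially the same two-part structure as the paper: a per-stage operation count and a geometric tail bound on the truncation horizon. The per-stage terms and the horizon bound $T\ge\log_\disc\frac{\epsilon(1-\disc)}{\rmax}$ match exactly.

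The only cosmetic difference is the attribution of the $|\MDPs||\CS||\CA|$ term: you charge it to the belief-weighted aggregation in step~\ref{step:combine}, while the paper charges it to computing the expected reward (the $O(|\MDPs|)$ part of each of the $|\CS||\CA|$ evaluations in step~\ref{step:combine}). Both operations are $\Theta(|\MDPs||\CS||\CA|)$, so the total and its factorisation into $|\MDPs||\CS|^2(|\CA|+1)+(1+|\MDPs|)|\CS||\CA|$ come out identical. Your remark that reward terms are ``lower order'' is slightly loose---they are the same order as the belief-weighting you do count---but since you already include one $|\MDPs||\CS||\CA|$ contribution, the bound is unaffected.
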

\begin{proof}
  Since $r_t \in [0, \rmax]$, if we look up to some horizon $T$, our
  value function error is bounded by $\disc^T c$, where $c = H \rmax$
  and $H = \frac{1}{1 - \disc}$ is the effective horizon.
  Consequently, we need $T \geq \log_\disc (\epsilon / c)$ to bound the
  error by $\epsilon$. For each $t$, step~\ref{step:combine} is
  performed $|\States||\Actions|$ times. Each step takes $O(|\MDPs|)$
  operations for the expected reward and $O(|\States||\MDPs|)$
  operations for the next-step expected utility. The second loop is
  $\CO(|\States|(|\Actions| + |\MDPs||\States|))$, since it is
  performed $|\States|$ times, with the max operators taking
  $|\Actions|$ operations, while inner loop is performed $|\MDPs|$
  times with each local MDP update step~\ref{step:local} takes
  $|\States|$ operations.
  \qed
\end{proof}
\begin{algorithm}
  \begin{algorithmic}[1]
    \Procedure{MSBI}{$\bel, \gamma, \epsilon$}
    \State $n = \left(\frac{3\rmax}{\epsilon(1-\disc)}\right)^3$.
    \State $\MDPs = \{\mdp_1, \ldots,
  \mdp_n\}$, $\mdp_i \sim \bel$.
  \State \textsc{MMBI}($\MDPs, p, \disc, \log_\disc \frac{\epsilon(1-\disc)}{\rmax}$), with $p(\mdp_i) = 1/n$ for all $i$.  
    \EndProcedure
  \end{algorithmic}
  \caption{MSBI: Multi-Sample Backwards Induction}
  \label{alg:msbi}
\end{algorithm}
It is easy to see that the most significant term is
$\CO(|\MDPs||\States|^2|\Actions|)$, so the algorithmic complexity
scales linearly with the number of MDPs. Consequently, when $\MDPs$ is
not finite, exact computation is not possible.  However, we can use
high probability bounds to bound the expected loss of a policy
calculated stochastically through \MSBI{} (Alg.\ref{alg:msbi}).

\MSBI{} simply takes a sufficient number of samples of MDPs from
$\bel$, so that in $\bel$-expectation, the loss relative to the
\MMBI{} policy is bounded according to the following lemma.
\begin{lemma}
%  The expected utility $\E_\bel U$ is within $3cn^{-1/3}$ , $n = (3c/\epsilon)^%3$ is sufficient to bound
%  the loss relative to the \MMBI{} policy by $\epsilon$, 
%where $c =
% \horizon \rmax$ and $\horizon = \frac{1}{1 - \disc}$ is the
 % effective horizon.
  The expected loss of \MSBI{} relative to \MMBI{} is bounded by $\epsilon$.
  \label{lem:mc-bound}
\end{lemma}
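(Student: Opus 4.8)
The plan is to treat \MSBI{} as a sample-average (Monte-Carlo) approximation of the optimisation problem solved by \MMBI{}, and to bound the resulting optimisation loss by the fluctuation of the empirical objective around its mean. Write $\bV(\pol) \defn \E_\bel V^\pol_\mdp$ for the quantity \MMBI{} maximises (exactly the curve labelled in Fig.~\ref{fig:value-function-bound}) and $\hV(\pol) \defn \frac{1}{n}\sum_{i=1}^n V^\pol_{\mdp_i}$ for the empirical objective maximised by \MSBI{}, where $\mdp_1, \ldots, \mdp_n \sim \bel$ are the sampled MDPs and $\pol$ ranges over $\Pols_1$. Let $\pol_\MMBI \in \argmax_\pol \bV(\pol)$ and $\pol_\MSBI \in \argmax_\pol \hV(\pol)$. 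Since $\pol_\MMBI$ maximises $\bV$, the loss $\bV(\pol_\MMBI) - \bV(\pol_\MSBI)$ is non-negative, and the goal is to show that its expectation over the draw of $\mdp_1,\ldots,\mdp_n$ is at most $\epsilon$.

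First I would set up the standard optimisation-error decomposition. Because $\pol_\MSBI$ maximises $\hV$, we have $\hV(\pol_\MSBI) \geq \hV(\pol_\MMBI)$, so
\begin{align*}
\bV(\pol_\MMBI) - \bV(\pol_\MSBI)
&\leq [\bV(\pol_\MMBI) - \hV(\pol_\MMBI)] + [\hV(\pol_\MSBI) - \bV(\pol_\MSBI)]
\leq 2 \sup_{\pol \in \Pols_1} |\hV(\pol) - \bV(\pol)|.
\end{align*}
For each fixed $\pol$, the estimate $\hV(\pol)$ is the mean of $n$ i.i.d.\ copies of $V^\pol_\mdp \in [0, \rmax/(1-\disc)]$, so $\E \hV(\pol) = \bV(\pol)$ and its variance is at most $\rmax^2 / [4 n (1-\disc)^2]$.

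Next I would convert this variance bound into a bound on the expected deviation. Writing $Z \defn \sup_\pol|\hV(\pol) - \bV(\pol)|$ and $b \defn \rmax/(1-\disc)$, I would use $\E Z = \int_0^b \Pr(Z > z)\dd z \leq \eta + b\,\Pr(Z > \eta)$ and bound the tail by the per-policy Chebyshev estimate $\Pr(Z > \eta) \leq b^2/(4n\eta^2)$ (deferring the passage from a single policy to the supremum to the last step). Choosing $\eta = b\, n^{-1/3}$ to balance the two contributions gives $\E Z \leq \tfrac{5}{4} b\, n^{-1/3}$, hence $\E[\bV(\pol_\MMBI) - \bV(\pol_\MSBI)] \leq \tfrac{5}{2} b\, n^{-1/3} \leq 3\rmax (1-\disc)^{-1} n^{-1/3}$; setting the right-hand side equal to $\epsilon$ yields exactly $n = (3\rmax / [\epsilon(1-\disc)])^3$, the value chosen in Algorithm~\ref{alg:msbi}. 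The $n^{-1/3}$ rate, and thus the cube in the sample count, is precisely the outcome of balancing the threshold $\eta$ against the polynomial (variance) tail.

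The main obstacle is the data-dependence of $\pol_\MSBI$: the term $\hV(\pol_\MSBI) - \bV(\pol_\MSBI)$ cannot be controlled by a single-policy concentration inequality, since $\pol_\MSBI$ is selected using the very samples that define $\hV$. Making the supremum bound rigorous requires uniform convergence of $\hV$ to $\bV$ over the policy class; because \MMBI{} returns a deterministic memoryless policy, it suffices to take a union bound over the finite set of deterministic stationary memoryless policies (of size $|\Actions|^{|\States|}$), or, equivalently, to propagate the per-state--action Monte-Carlo errors of the reward and transition estimates through the backward-induction recursion. Either route inserts a mild logarithmic dependence on $|\States|$ and $|\Actions|$ into the Chebyshev step; absorbing this factor into the constant recovers the clean cubic sample bound quoted above, and choosing the cleanest way to discharge it is the only delicate point of the argument.
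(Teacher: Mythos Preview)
Your strategy---treat \MSBI{} as a sample-average approximation, bound the optimisation loss by the fluctuation of $\hV$ around $\bV$, integrate a tail bound, and balance the threshold to obtain an $n^{-1/3}$ rate---is exactly what the paper does. The paper's proof is terser: it applies Hoeffding's inequality (not Chebyshev) to a fixed policy, uses the ``with probability $1-\delta$ the deviation is at most $c\sqrt{\ln(1/\delta)/(2n)}$, otherwise at most $c$'' device to bound the expected deviation by $c\delta + c\sqrt{\ln(1/\delta)/(2n)}$, substitutes $\delta=(8n)^{-1/3}$ together with $\ln(1/\delta)\le 1/\delta$, and arrives at the same $3cn^{-1/3}$ and hence the same cubic sample count $n=(3\rmax/[\epsilon(1-\disc)])^3$. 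So your decomposition and constants match the paper's.

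The one place where your route diverges meaningfully is the choice of Chebyshev over Hoeffding, and it matters precisely at the step you flag as the obstacle. With a union bound over the $|\Actions|^{|\States|}$ deterministic memoryless policies, Hoeffding's sub-Gaussian tail turns the overhead into $\sqrt{|\States|\ln|\Actions|/n}$, which is indeed the ``mild logarithmic dependence'' you describe. Chebyshev, however, has only a polynomial tail: the union bound multiplies the tail probability by $|\Actions|^{|\States|}$, and after balancing you pick up a factor $|\Actions|^{|\States|/3}$ in front of $n^{-1/3}$, which is exponential in $|\States|$ and cannot be ``absorbed into the constant''. So either switch to Hoeffding (as the paper does) before taking the union bound, or drop the claim that the overhead is logarithmic. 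It is worth noting that the paper itself does not carry out this uniformity step---it simply asserts that the single-policy bound ``holds uniformly'' and passes directly to the data-dependent maximiser---so your explicit identification of this as the delicate point is well taken and, if anything, more careful than the published argument.
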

\begin{proof}
  Let $\hat{\E}^n \Util = \frac{1}{n} \sum_{i=1}^n \E_{\mdp_i} \Util$
  denote the empirical expected utility over the sample of $n$ MDPs,
  where the policy subscript $\pol$ is omitted for simplicity.  Since
  $\E_\bel \hat{\E}^n \Util = \E_\bel \Util$, we can use the Hoeffding
  inequality to obtain:
  \[
  \bel\left(\cset{\mdp^n}{\hat{\E}^n \Util \geq \E_\bel \Util + \epsilon}\right)
  \leq
  e^{-2n\epsilon^2/c^2}.
  \]
This implies the following bound:
  \begin{align*}
    \E_\bel (\hat{\E}^n \Util - \E_\bel \Util) &\leq c \delta + c
    \sqrt{\frac{\ln (1/\delta)}{2n}}
    \leq c (8n)^{-1/3} + c \sqrt{\frac{(8n)^{1/3}}{2n}}
    %\\
    %&= 
    %c (8n)^{-1/3} + c n^{-1/3}
    = 3cn^{-1/3}.
  \end{align*}
  Let $\Pols_1$ be the set of memoryless policies.  Since the bound
  holds uniformly (for any $\pol \in \Pols$), the policy $\hat{\pol}^*
  \in \Pstat$ maximising $\hat{\E}^n$ is within $3cn^{-1/3}$ of the
  $\bel$-optimal policy in $\Pols_1$.  \qed
\end{proof}
Finally, we can combine the above results to bound the complexity of
achieving a small approximation error for \MSBI{}, with respect to expected loss:
\begin{theorem}
  \MSBI{} (Alg.~\ref{alg:msbi}) requires 
  $\CO\left(
    \left(\frac{6 \rmax}{\epsilon(1 - \disc)}\right)^3
    |\States|^2|\Actions|
    \log_\disc \frac{\epsilon(1-\disc)}{2\rmax}
    \right)$
  operations to be $\epsilon$-close to the best MMBI policy.
\end{theorem}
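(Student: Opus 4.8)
The plan is to obtain the statement as a direct composition of Lemma~\ref{lem:mc-bound} and Lemma~\ref{lem:mmbi-complexity}, after splitting the total error budget $\epsilon$ into two equal halves, one covering the Monte Carlo sampling error and the other the horizon truncation error. Writing $\hat\pol$ for the policy actually returned by \MSBI{} and letting $\pol_{\MMBI}$ denote the target full-belief MMBI policy, I would bound the loss $\E_\bel V^{\pol_{\MMBI}}_\mdp - \E_\bel V^{\hat\pol}_\mdp$ by the triangle inequality as the sum of (i) the loss incurred by replacing $\bel$ with the $n$-sample empirical distribution, and (ii) the loss incurred by truncating the horizon at $T$. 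Allocating $\epsilon/2$ to each term and requiring their sum to be at most $\epsilon$ is exactly what produces the constants $6$ and $2$ in the final bound.

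For the sampling term I would invoke Lemma~\ref{lem:mc-bound}, whose proof shows that $n$ samples place the empirical-optimal memoryless policy within $3cn^{-1/3}$ of the $\bel$-optimal policy in $\Pols_1$, with $c = \rmax/(1-\disc)$. Demanding $3cn^{-1/3} \leq \epsilon/2$ and solving for $n$ gives $n = \left(\frac{6\rmax}{\epsilon(1-\disc)}\right)^3$, which is precisely the cubic factor in the statement (the \MSBI{} choice of $n$ with $\epsilon$ replaced by $\epsilon/2$). For the truncation term I would reuse the argument in the proof of Lemma~\ref{lem:mmbi-complexity}: looking only up to horizon $T$ incurs an error of at most $\disc^T c$, so requiring $\disc^T c \leq \epsilon/2$ yields $T \geq \log_\disc \frac{\epsilon(1-\disc)}{2\rmax}$, matching the logarithmic factor. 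By the triangle inequality the two halves then combine to a total loss of at most $\epsilon$.

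It remains to substitute $|\MDPs| = n$ and this value of $T$ into the complexity expression of Lemma~\ref{lem:mmbi-complexity}. Since the dominant term there is $\CO(|\MDPs||\States|^2|\Actions|)$, the per-stage cost is $\CO\left(\left(\frac{6\rmax}{\epsilon(1-\disc)}\right)^3 |\States|^2|\Actions|\right)$, and multiplying by the number of stages $T = \log_\disc \frac{\epsilon(1-\disc)}{2\rmax}$ delivers the claimed operation count.

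I do not expect a genuinely hard step: the result is essentially bookkeeping on top of the two lemmas. The only point requiring care is the decomposition of the loss into two additive, independently controllable sources and the verification that the substitutions ($3 \to 6$, $\rmax \to 2\rmax$) arise exactly from the even split of the budget; one must also check that the Hoeffding bound of Lemma~\ref{lem:mc-bound} is taken over the infinite-horizon utility range $c = \rmax/(1-\disc)$, so that the sampling and truncation errors are measured on a common scale and may legitimately be added.
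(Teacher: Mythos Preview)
Your proposal is correct and follows essentially the same route as the paper: split the error budget evenly between the Monte Carlo sampling loss of Lemma~\ref{lem:mc-bound} (yielding $n=(6c/\epsilon)^3$) and the horizon-truncation loss of Lemma~\ref{lem:mmbi-complexity} (yielding the $\log_\disc \tfrac{\epsilon(1-\disc)}{2\rmax}$ factor), then substitute $|\MDPs|=n$ into the dominant $\CO(|\MDPs||\States|^2|\Actions|)$ term. The paper's own proof is considerably terser than yours---it simply cites the two lemmas and the substitution---so your explicit triangle-inequality decomposition and the check that $c=\rmax/(1-\disc)$ is the common scale are welcome elaborations rather than departures.
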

\begin{proof}
  From Lem.~\ref{lem:mc-bound}, we can set $n = (6c/\epsilon)^3$ to
  bound the regret by $\epsilon / 2$. Using the same value in
  Lem.~\ref{lem:mmbi-complexity}, and setting $|\MDPs| = n$, we obtain
  the required result.
  \qed
\end{proof}

\subsection{Application to robust Bayesian reinforcement learning}
While \MSBI{} can be used to obtain a memoryless policy which is in
expectation close to both the optimal memoryless policy and the
Bayes-optimal policy for a given belief, the question is how to extend
the procedure to online reinforcement learning. The simplest possible
approach is to simply recalculate the stationary policy after some
interval $B > 0$. This is the approach followed by \MCBRL{}
(Alg.~\ref{alg:mcbrl}), shown below.
\label{sec:application-bayesian-rl}
\begin{algorithm}
  \begin{algorithmic}[1]
    \Procedure{MCBRL}{$\bel_0, \gamma, \epsilon, B$}
    \State Calculate $\bel_t(\cdot) = \bel_0(\cdot \mid s^t, a^{t-1})$.
    \State Call \MSBI{}($\bel_t, \gamma, \epsilon)$ and run returned policy for $B$ steps.
    \EndProcedure
  \end{algorithmic}
  \caption{MCBRL: Monte-Carlo Bayesian Reinforcement Learning}
  \label{alg:mcbrl}
\end{algorithm}

\section{Experiments in reinforcement learning problems}
\label{sec:experiments}
Selecting the number of samples $n$ according to $\epsilon$ for
\MCBRL{} is computationally prohibitive.  In practice, instead of
setting $n$ via $\epsilon$, we simply consider increasing values of
$n$. For a single sample ($n=1$), \MCBRL{} is equivalent to the
sampling method in~\cite{strens2000bayesian}, which at every new
stage, samples a single MDP from the current posterior and then uses
the policy that is optimal for the sampled MDP.  In addition, for this
particular experiment, rather than using the memoryless policy found,
we apply the stationary policy derived by using the first step of the
memoryless policy. This incurs a small additional loss.
\begin{algorithm}[htb]
  \begin{algorithmic}[1]
    \Procedure{Exploit}{$\bel_0, \gamma$}
    \For {$t=1, \ldots}$
    \State Calculate $\bel_t(\cdot) = \bel_0(\cdot \mid s^t, a^{t-1})$.
    \State Estimate $\hat{\mdp}_{\bel_t} \defn \E_{\bel_t} \mdp$.
    \State Calculate $Q^*_{\hat{\mdp}_{\bel_t}}(s,a)$ using discount parameter $\gamma$.
    \State Select $a_t = \argmax_a Q^*_{\hat{\mdp}_{\bel_t}}(s,a)$
    \EndFor
    \EndProcedure
  \end{algorithmic}
  \caption{\textsc{Exploit}: Expected MDP exploitation~\cite{poupart2006asd}}
  \label{alg:expected}
\end{algorithm}
We also compared \MCBRL{} against the common heuristic of acting
according to the policy that is optimal with respect to the {\em
  expected} MDP $\emdp \defn \E_\bel \mdp$.  The algorithm, referred
to as the \textsc{Exploit} heuristic in \cite{poupart2006asd}, is
shown in detail in Alg.~\ref{alg:expected}. At every step, this
calculates the expected MDP by obtaining the expected transition
kernel and reward function under the current belief.  It then acts
according to the optimal policy with respect to $\emdp$.  This policy
may be much worse than the optimal policy, even within the class of
stationary policies $\Pstat$.

\begin{figure}[htb]
  \centering
  \subfigure[Expected regret estimate]{
    \psfrag{nsamples}[B][B][1.0][0]{$n$}
    \psfrag{regret}[B][B][1.0][0]{$\Regret$}
    \psfrag{sampling}[B][B][0.7][0]{Alg.~\ref{alg:mcbrl}}
    \psfrag{expected}[B][B][0.7][0]{Alg.~\ref{alg:expected}}
    \includegraphics[width=0.45\textwidth]{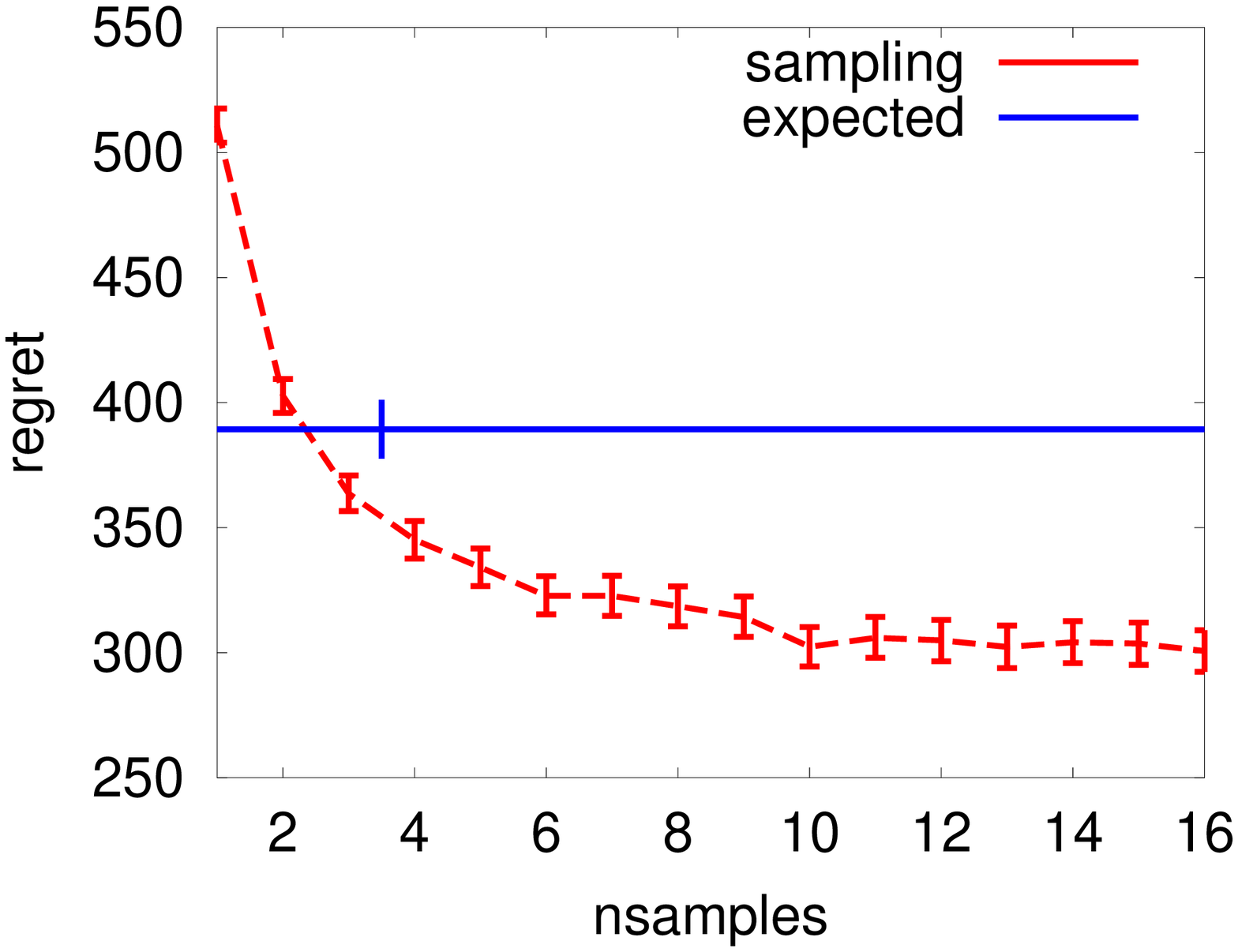}
    \label{fig:chain-task-regret}
  }
  \subfigure[Empirical performance distribution]{
    \psfrag{total reward}[B][B][1.0][0]{total reward $\times 10^{-3}$}
    \psfrag{count}[B][B][1.0][0]{number of runs}
    \psfrag{s1}[B][B][0.7][0]{$n=1$}
    \psfrag{s8}[B][B][0.7][0]{$n=8$}
    \psfrag{model}[B][B][0.7][0]{Alg.~\ref{alg:expected}}
    \includegraphics[width=0.45\textwidth]{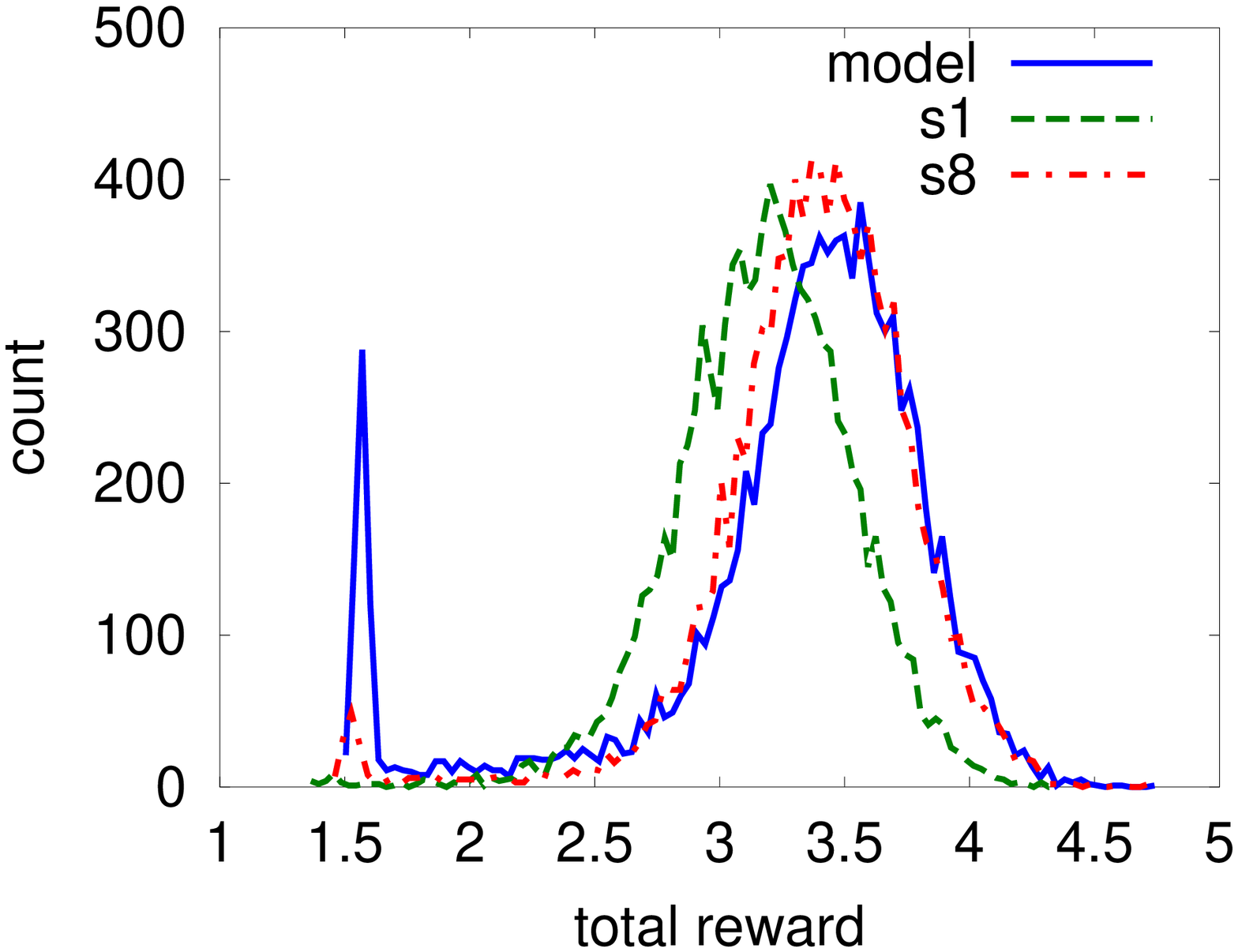}
    \label{fig:chain-task-empirical}
  }
  \caption{Performance on the chain task, for the first $10^3$ steps,
    over $10^4$ runs. \subref{fig:chain-task-regret}: Expected regret
    relative to the optimal (oracle) policy. The {\em sampling} curve
    shows the regret of Alg.~\ref{alg:mcbrl}, as the number of samples
    increases, with $95\%$ confidence interval calculated via a
    $10^4$-boostrap. The {\em expected} curve shows the performance of
    an algorithm acting greedily with respect to the expected MDP.
    \subref{fig:chain-task-empirical}: Empirical distribution of total
    rewards for: the {\em expected} MDP approach and \MCBRL{} with
    $n=1$ and $n=8$ samples.}
  \label{fig:chain-task}
\end{figure}
We compared the algorithms on the Chain task~\cite{dearden98bayesian},
commonly used to evaluate exploration in reinforcement learning
problems. Traditionally, the task has a horizon of $10^3$ steps, a
discount factor $\gamma = 0.95$, and the expected total reward
$\E_{\mdp, \pol} \sum_{t=1}^T r_t$ is compared. We also report the
expected utility $\E_{\mdp, \pol} U_t$, which depends on the discount
factor. All quantities are estimated over $10^4$ runs with
appropriately seeded random number generators to reduce
variance.\footnote{In both cases this expectation is with respect to
  the distribution induced by the actual MDP $\mdp$ and policy $\pol$
  followed, rather than with respect to the belief $\bel$.} The
initial belief about the state transition distribution was set to be a
product-Dirichlet prior
\cite[see][]{Degroot:OptimalStatisticalDecisions} with all parameters
equal to $|\CS|^{-1}$, while a product-Beta prior with parameters
$(1,1)$ was used for the rewards.

Figure~\ref{fig:chain-task} summarises the results in terms of total
reward. The left hand side~(\ref{fig:chain-task-regret}) shows the
expected difference in total reward between the optimal policy
$\pol^*$ and the used policy $\pol$, over $T$ steps, i.e. the regret:
$\Regret = \E_{\mdp, \pol} \sum_{t=1}^T r_t - \E_{\mdp, \pol}
\sum_{t=1}^T r_t$.  The error bars denote $95\%$ confidence intervals
obtained via a $10^4$-bootstrap~\cite{Efron:1993:Introduction}. For
$n=1$, \MCBRL{} performs worse than the expected MDP approach, in
terms of {\em total reward}. On the other hand, as the number of
samples increase, its performance monotonically improves.

Some more detail on the behaviour of the algorithms is given in
Figure~\ref{fig:chain-task-empirical}, which shows the empirical
performance distribution in terms of total reward. The expected MDP
approach has a high probability of getting stuck in a sub-optimal
regime. On the contrary, \MCBRL{}, for $n=1$, results in significant
over-exploration of the environment. However, as $n$ increases,
\MCBRL{} explores significantly less, while the number of runs where
we are stuck in the sub-optimal regime remains small ($<1\%$ of the
runs).
\begin{table}
  \centering
  \begin{tabular}{c|c|c|c}
    Model & $\sum_{t=1}^{1000} r_t$ ($\E U$) & 80\% percentile& confidence interval\\
    \hline
    Alg.~\ref{alg:expected} & 3287 (26.64) & 2518 -- 3842 & 3275 -- 3299 \\
    $n=1$ & 3166 (28.50) & 2748 -- 3582 & 3159 -- 3173 \\
    $n=8$ & 3358 (29.65) & 2932 -- 3800 & 3350 -- 3366 \\
    $n=16$ & 3376 (29.95) & 2946 -- 3830 & 3368 -- 3384\\
    \hline
    \hline
    Model & $\sum_{t=1}^{1000} r_t$ & \multicolumn{2}{c}{Standard interval}\\
    \hline
    \Beetle{}~\cite{poupart2006asd} & 1754 & \multicolumn{2}{|c}{1712--1796}  \\
    AMP-EM~\cite{Furmston:VariationalRL} & 2180 & \multicolumn{2}{|c}{2108--2254}\\
    SEM~\cite{Furmston:VariationalRL} & 2052 & \multicolumn{2}{|c}{2000 --2111}
  \end{tabular}
  \caption{Comparative results on the chain task. The 80\% percentile interval is such that no more than 10\% of the runs were above the maximum or below the minimum value. The confidence interval on the accuracy of the mean estimate, is the 95\% bootstrap interval. The results
    for \Beetle{} and the EM algorithms were obtained from the cited papers, with and the interval based on the reported standard deviation.}
  \label{tab:chain}
\end{table}
Table~\ref{tab:chain} presents comparative results on the chain task
for Alg.~\ref{alg:expected} and for \MCBRL{} for $n \in \{1, 8, 16\}$
in terms of the total reward received in $10^3$ steps. This enables us
to compare against the results reported in
\cite{poupart2006asd,Furmston:VariationalRL}. While the performance of
Alg.~\ref{alg:expected} may seem surprisingly good, it is actually in
line with the results reported in~\cite{poupart2006asd}. Therein,
\Beetle{} only outperformed Alg.~\ref{alg:expected} in the {\em Chain}
task when stronger priors were used. In addition, we would like to
note that while the case $n=1$ is worse than Alg.~\ref{alg:expected}
for the total reward metric, this no longer holds when we examine the
expected utility, where an improvement can already be seen for $n=1$.

\section{Discussion}
We introduced \MMBI{}, a simple backwards induction procedure, to
obtain a near-optimal memoryless policy with respect to a belief over
a finite number of MDPs. This was generalised to \MSBI{}, a stochastic
procedure, whose loss is close in expectation to \MMBI{}, with a gap
that depends polynomially on the number of samples, for a belief on
arbitrary set of MDPs. It is shown that \MMBI{} results in a much
tighter lower bound on the value function that the value of the
$\emdp$-optimal policy. In addition, we prove a bound on the gap
between the value of the $\MMBI{}$ policy and the Bayes-optimal
policy. Our results are then applied to reinforcement learning
problems, by using the \MCBRL{} algorithm to sample a number of MDPs
at regular intervals. This can be seen as a principled generalisation
of \cite{strens2000bayesian}, which only draws one sample at each such
interval. Then \MSBI{} is used to calculate a near-optimal memoryless
policy within each interval. We show experimentally that this
performs significantly better than following the $\emdp$-optimal
policy. It is also shown that the performance increases as we make the
bound tighter by increasing the number of samples taken.

Compared to results reported for other Bayesian reinforcement learning
approaches on the {\em Chain} task, this rather simple method performs
surprisingly well. This can be attributed to the fact that at each
stage, the algorithm selects actions according to a nearly-optimal
stationary policy.

In addition, \MSBI{} itself could be particularly useful for {\em
  inverse} reinforcement learning problems (see for example
\cite{abbeel2004apprenticeship,Ng00algorithmsfor}) where the
underlying dynamics are unknown, or to multi-task problems\cite{Snel:Multitask}. Then it would be possible to obtain
good stationary policies that take into account the uncertainty over
the dynamics, which should be better than using the expected MDP
heuristic. More specifically, in future work, MMBI will be used to
generalise the Bayesian methods developed
in~\cite{dimitrakakis:bmirl:ewrl:2011,rothkopf:peirl:ecml:2011} for
the case of unknown dynamics.

In terms of direct application to reinforcement learning, \MSBI{}
could be used in the inner loop of some more sophisticated method than
\MCBRL{}. For example, it could be employed to obtain tight lower
bounds for the leaf nodes of a planning tree such
as\cite{dimitrakakis-icaart2010}.  By tight integration with such
methods, we hope to obtain improved performance, since we would be
considering wider policy classes.  In a related direction, it would be
interesting to see examine better upper
bounds~\citep{deFarias:CS-LP-ADP:2004,deFarias:LP-ADP:2003,Rogers:Pathwise-Control}
and in particular whether the information relaxations discussed by
\citet{Brown:Information-Duality-SDP} could be extended to the 
Bayes-optimal case.

% use section* for acknowledgement
\section*{Acknowledgements}
Many thanks to Matthijs Snel and Shimon Whiteson for extensive
discussions on the optimality of the MMBI algorithm, and for helping
to discover an error. In addition, I would like to thank Nikos Vlassis
and the anonymous reviewers for helpful comments. This work was
partially supported by the EU-Project IM-CLeVeR, FP7-ICT-IP-231722,
and the Marie Curie Project ESDEMUU, Grant Number 237816.

%\bibliographystyle{plainnat}
%\bibliography{../../bib/misc,../../bib/mine}

% that's all folks
\end{document}